\theoremstyle{definition}  %Sets style of subsequent newtheorems to 'definition'
\newtheorem{lemma}{Lemma}
\newtheorem{proposition}{Proposition}
\theoremstyle{plain}
\newtheorem{theorem}{Theorem}
\xpatchcmd{\proof}{\itshape}{\normalfont\proofnameformat}{}{}
\newcommand{\proofnameformat}{\bfseries}
\newcommand{\pref}[1]{\prettyref{#1}}
\newcommand{\savehyperref}[2]{\texorpdfstring{\hyperref[#1]{#2}}{#2}}
\DeclarePairedDelimiter{\abs}{\lvert}{\rvert} %
\DeclarePairedDelimiter{\crl}{\{}{\}}
\DeclarePairedDelimiter{\prn}{(}{)}
\DeclarePairedDelimiter{\nrm}{\|}{\|}
\DeclarePairedDelimiter{\tri}{\langle}{\rangle}
\DeclareMathOperator{\En}{\mathbb{E}}
\def\ddefloop#1{\ifx\ddefloop#1\else\ddef{#1}\expandafter\ddefloop\fi}
\def\ddef#1{\expandafter\def\csname bb#1\endcsname{\ensuremath{\mathbb{#1}}}}
\def\ddefloop#1{\ifx\ddefloop#1\else\ddef{#1}\expandafter\ddefloop\fi}
\def\ddef#1{\expandafter\def\csname b#1\endcsname{\ensuremath{\mathbf{#1}}}}
\def\ddef#1{\expandafter\def\csname c#1\endcsname{\ensuremath{\mathcal{#1}}}}
\def\ddef#1{\expandafter\def\csname h#1\endcsname{\ensuremath{\widehat{#1}}}}
\def\ddef#1{\expandafter\def\csname hc#1\endcsname{\ensuremath{\widehat{\mathcal{#1}}}}}
\def\ddef#1{\expandafter\def\csname t#1\endcsname{\ensuremath{\widetilde{#1}}}}
\def\ddef#1{\expandafter\def\csname tc#1\endcsname{\ensuremath{\widetilde{\mathcal{#1}}}}}
\newcommand{\ls}{\ell}
\newcommand{\pmo}{\crl*{\pm{}1}}
\newcommand{\eps}{\epsilon}
\newcommand{\veps}{\varepsilon}
\newcommand{\ldef}{\vcentcolon=}
\newcommand{\bigO}{\cO}
\newcommand{\bigOt}{\tilde{\cO}}
\newcommand{\xl}[1][n]{x_{1:#1}}
\newcommand{\Rad}{\mathfrak{R}}
\newcommand{\RadW}{\mathfrak{R}_n}
\newcommand{\midsem}{\,;}
\newcommand{\cFi}{\cF|_i}
\newcommand{\fat}{\mathrm{fat}}
\newcommand{\approxleq}{\lesssim}
\begin{document}

\title{$\ell_{\infty}$ Vector Contraction for Rademacher Complexity}

\author{Dylan J. Foster \\ {\small dylanf@mit.edu}\\
\and	
      Alexander Rakhlin \\{\small rakhlin@mit.edu} \\
}
\date{}
%\editor{}

\maketitle

\begin{abstract}
We show that the Rademacher
complexity of any $\bbR^{K}$-valued function class composed with an
$\ls_{\infty}$-Lipschitz function is bounded by the maximum Rademacher
complexity of the restriction of the function class along each
coordinate, times a factor of $\bigOt(\sqrt{K})$.
\end{abstract}

\section{Introduction}
\label{sec:intro}
% !TEX root = paper.tex

Rademacher complexity plays a fundamental role in
learning theory, where it tightly bounds the supremum of the empirical
process \citep{koltchinskii2000rademacher,bartlett2003rademacher} and is used to prove generalization guarantees for empirical risk minimization and other learning rules. Let $\cF$ be a class of functions $f:\cX\to\bbR$, and define the
empirical Rademacher complexity of $\cF$ for a sequence $\xl[n]=(x_1,\ldots,x_n)$ via
\begin{equation}
  \label{eq:1}
  \Rad(\cF\midsem\xl[n]) = \En_{\eps}\sup_{f\in\cF}\sum_{t=1}^{n}\eps_tf(x_t),
\end{equation}
where $\eps=(\eps_1,\ldots,\eps_n)$ is a sequence of i.i.d. Rademacher
random variables. A standard result used to bound Rademacher complexity for complex function
classes is the Lipschitz contraction inequality
\citep{ledoux1991probability}, which states (in its most basic form) that for any fixed sequence of $L$-Lipschitz mappings $\phi_1,\ldots,\phi_n$, 
we have
\begin{equation}
  \label{eq:lipschitz_contraction}
\En_{\eps}\sup_{f\in\cF}\sum_{t=1}^{n}\eps_t\phi_t\prn*{f(x_t)}
\leq{} L\cdot \En_{\eps}\sup_{f\in\cF}\sum_{t=1}^{n}\eps_tf(x_t).
\end{equation}
This result holds for the setting where the function class $\cF$ is real-valued, but there are many applications in which it is natural to work with
classes of vector-valued functions. \cite{maurer2016vector} recently proved a
vector-valued generalization of the contraction
inequality: When $\cF\subseteq\crl*{f:\cX\to\bbR^{K}}$ and $\phi_1,\ldots,\phi_n$ are
$L$-Lipschitz with respect to the $\ls_2$ norm, it holds that
\begin{equation}
\En_{\eps}\sup_{f\in\cF}\sum_{t=1}^{n}\eps_t\phi_t\prn*{f(x_t)}
\leq{}\sqrt{2} L\cdot
\En_{\eps}\sup_{f\in\cF}\sum_{t=1}^{n}\sum_{i=1}^{K}\eps_{t,i}f_i(x_t).\label{eq:maurer}
\end{equation}
This result has found numerous applications, including $K$-means clustering
\citep{fefferman2016testing}, robust learning
\citep{attias2019improved}, structured prediction
\citep{cortes2016structured}, neural network generalization
\citep{neyshabur2018towards}, and non-convex optimization
\citep{foster2018uniform,davis2018uniform}. In some applications, however, $\ls_2$ contraction may be loose if the
functions $\phi$ satisfy more favorable Lipschitz properties. For $K$-means clustering, the bound \pref{eq:maurer} leads to a rate of
$\bigO(K\sqrt{n})$ \citep{maurer2016vector}, while the optimal rate is $\bigOt(\sqrt{Kn})$
\citep{fefferman2016testing}. 

In this short note we prove a tighter vector contraction inequality
for $\ls_{\infty}$-Lipschitz functions. When $\phi_1,\ldots,\phi_n$ are $L$-Lipschitz with respect to
the $\ls_{\infty}$ norm,
(i.e. $\nrm*{\phi_t(x)-\phi_t(y)}_{\infty}\leq{}L\cdot\nrm*{x-y}_{\infty}$),
we show that
\begin{equation}
\label{eq:main_short}
\En_{\eps}\sup_{f\in\cF}\sum_{t=1}^{n}\eps_t\phi_t\prn*{f(x_t)}
\leq{} \bigOt\prn*{L\sqrt{K}}\cdot
\max_{i}\sup_{\xl[n]\in\cX}\En_{\eps}\sup_{f\in\cF}\sum_{t=1}^{n}\eps_{t}f_i(x_t).
\end{equation}
The result recovers the correct $\tilde{O}(\sqrt{Kn})$ rate for
$K$-means clustering, and generalizes a bound for $K$-fold maxima of
hyperplanes due to \cite{kontorovich2018rademacher}. Up to $\log{}n$ factors, the result cannot
be improved.

%%% Local Variables:
%%% mode: latex
%%% TeX-master: "paper.tex"
%%% End:

\section{The $\ls_{\infty}$ contraction inequality}
\label{sec:contraction}
% !TEX root = paper.tex
To state the main result we require some additional notation. For a real-valued function
class $\cF$, let $\RadW(\cF)=\max_{\xl[n]\in\cX}\Rad(\cF\midsem\xl[n])$ denote the worst-case Rademacher complexity. When $\cF$ takes values
in $\bbR^{K}$, let $\cFi$ denote its restriction to output
coordinate $i$. %Our main result can now be stated as follows.
\newcommand{\rangebound}{\beta}
\begin{theorem}[$\ls_{\infty}$ contraction inequality]
\label{thm:main}
Let $\cF\subseteq
\crl*{f:\cX\to\bbR^{K}}$, and let $\phi_1,\ldots,\phi_n$ each be $L$-Lipschitz with respect to the
$\ls_{\infty}$ norm. For any $\delta>0$, there exists a constant $C>0$
such that if
$\abs*{\phi_t(f(x))}\vee\nrm*{f(x)}_{\infty}\leq{}\rangebound$, then
% for
%all $x\in\cX$ and $f\in\cF$,
  \begin{equation}
    \label{eq:main_theorem}
\Rad(\phi\circ\cF\midsem\xl[n])
\leq{} C\cdot{}L\sqrt{K}\cdot\max_{i}\RadW(\cFi)\cdot\log^{\frac{3}{2}+\delta}\prn*{\tfrac{\beta{}n}{\max_{i}\RadW(\cFi)}}.
  \end{equation}
\end{theorem}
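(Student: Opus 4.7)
The plan is to bound $\Rad(\phi\circ\cF\midsem\xl[n])$ via a chaining argument over $\ell_\infty$ covers of the vector-valued class $\cF$, exploiting that the product structure of such covers combined with the $\ell_\infty$-Lipschitz property of $\phi_t$ produces the $\sqrt{K}$ factor naturally. Intuitively, an $\ell_\infty$ cover is preserved in scale after composition with $\phi_t$ (unlike a product $\ell_2$ cover, which loses a $\sqrt{K}$ in scale), while the product cover's cardinality carries $K$ inside the logarithm, so chaining pulls $\sqrt{K}$ outside the entropy integral.

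Concretely, I would first construct, for each scale $\epsilon>0$ and each coordinate $i\in[K]$, a minimal $\epsilon$-cover of $\cFi$ in the empirical $\ell_\infty$ norm on $\xl[n]$, with cardinality $N^\infty(\cFi,\epsilon,\xl[n])$; the Cartesian product over $i$ yields a cover of $\cF$ in the product $\ell_\infty$ metric on $[n]\times[K]$ of size at most $\prn*{\max_iN^\infty(\cFi,\epsilon,\xl[n])}^K$. Since each $\phi_t$ is $L$-Lipschitz in $\ell_\infty$, composing sample-wise turns this into an $\ell_\infty$ cover of $\phi\circ\cF$ at scale $L\epsilon$ of the same cardinality. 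Plugging into Dudley's entropy integral (using $N_2\leq N^\infty$), the factor $\sqrt{K}$ drops out of $\sqrt{\log(\cdot)^K}$ to give, after the substitution $v=u/L$,
\begin{equation*}
\Rad(\phi\circ\cF\midsem\xl[n]) \lesssim L\sqrt{K}\int_{0}^{\beta}\sqrt{n\log N^\infty(\cFi,v,\xl[n])}\,dv.
\end{equation*}
The remaining work is to bound this single-coordinate entropy integral by $\RadW(\cFi)$ up to the required log factors. I would invoke the Mendelson--Vershynin inequality in its refined form $\log N^\infty(\cFi,v,\xl[n])\lesssim \fat_{cv}(\cFi)\log^{1+\delta}(\beta n/v)$ together with a Sudakov-derived fat-shattering estimate $\fat_\epsilon(\cFi)\lesssim \RadW(\cFi)^2/(n\epsilon^2)$, obtained by extracting a Gilbert--Varshamov subfamily of the shattering witnesses and applying Sudakov minoration to exploit their $L_2(\xl[n])$-separation. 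Together these yield integrand $(\RadW^*/v)\log^{(1+\delta)/2}(\beta n/v)$; the substitution $y=\log(\beta n/v)$ reduces the integral to an incomplete-gamma-type expression bounded by $\RadW^*\log^{(3+\delta)/2}(\beta n/\alpha)$, and balancing the truncation error $L\alpha n$ against this chaining sum by setting $\alpha\sim \RadW^*/n$ yields $L\sqrt{K}\RadW^*\log^{3/2+\delta}(\beta n/\RadW^*)$ as claimed.

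The main obstacle is obtaining the sharp fat-shattering bound $\fat_\epsilon(\cFi)\lesssim \RadW(\cFi)^2/(n\epsilon^2)$: the elementary shattering argument gives only $\fat_\epsilon\leq \RadW/\epsilon$, which would leave a residual $\sqrt{n\beta/\RadW^*}$ factor in the final bound and destroy the stated rate. The quadratic-in-$\RadW$ form requires careful use of Sudakov minoration on a $2^{\Omega(d)}$-size Gilbert--Varshamov subfamily of shattering witnesses, whose pairwise $L_2(\xl[n])$-separation combines with the Sudakov lower bound on the Rademacher process to yield the correct scaling. A secondary but essential issue is log bookkeeping: the standard $\log^2$ version of Mendelson--Vershynin would inflate the final exponent to $5/2$, so the refined $\log^{1+\delta}$ variant (valid for any $\delta>0$ at the cost of a $\delta$-dependent constant in $C$) is what delivers the stated $\log^{3/2+\delta}$ exponent.
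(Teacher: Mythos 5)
Your proposal is correct and follows essentially the same route as the paper: a Cartesian-product $\ell_\infty$ cover of the coordinate restrictions composed with the $\ell_\infty$-Lipschitz maps (giving $K$ inside the log and hence $\sqrt{K}$ outside the entropy integral), the refined $\log^{1+\delta}$ covering-number-to-fat-shattering bound (Rudelson--Vershynin, Theorem 4.4), the bound $\fat_{\veps}(\cFi)\lesssim \RadW^{2}(\cFi)/(n\veps^{2})$, and a Dudley integral truncated at $\alpha\sim\RadW/n$. The only cosmetic difference is that the fat-shattering estimate you flag as the main obstacle is a known lemma (Srebro et al., Lemma A.2) whose proof is a direct Khintchine argument on shattered points repeated $n/d$ times each---no Sudakov minoration or Gilbert--Varshamov extraction is needed.
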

\begin{proof}
We first gather some basic definitions. For a real-valued function
class $\cG\subseteq\crl*{g:\cX\to\bbR}$, the empirical $L_2$ covering
number $\cN_2(\cG,\veps,\xl[n])$ is 
the size of the smallest set of sequences $V\subseteq\bbR^{n}$ for
which
\[
\forall{}g\in\cG\;\;\exists{}v\in{}V\;\;\text{such that}\;\;\prn*{\frac{1}{n}\sum_{t=1}^{n}\prn*{g(x_t)-v_t}^{2}}^{1/2}\leq{}\veps.
\]
The empirical $L_{\infty}$ covering
number $\cN_{\infty}(\cG,\veps,\xl[n])$ is the size of the smallest set of sequences $V\subseteq\bbR^{n}$ for
which
\[
\forall{}g\in\cG\;\;\exists{}v\in{}V\;\;\text{such that}\;\;\max_{1\leq{}t\leq{}n}\abs*{g(x_t)-v_t}\leq{}\veps.
\]
Recall that $\cG$ is said to shatter $x_1,\ldots,x_n$ at
scale $\gamma$ if there exists a sequence $v_1,\ldots,v_n$ such that
\[
\forall{}\eps\in\pmo^{n}\;\;\exists{}g\in\cG\;\;\text{such
  that}\;\;\eps_t\cdot{}(g(x_t)-v_t) \geq{}\frac{\gamma}{2}\;\;\forall{}t.
\]
The fat-shattering dimension
\citep{alon1997scale,bartlett1998prediction} is then defined via
\[
\fat_{\gamma}(\cG) =
\max\crl*{n\mid{}\text{$\exists{}x_1,\ldots,x_n$ such that $\cG$
    $\gamma$-shatters $x_1,\ldots,x_n$}}.
\]
Our proof is based on technique introduced by \cite{srebro2010smoothness}: we first bound Rademacher complexity in terms of covering numbers,
then bound covering numbers in terms of fat-shattering dimension, and
finally bound fat-shattering dimension in terms of Rademacher
complexity. The key idea is to use $\ls_{\infty}$-Lipschitzness to
remove the functions $\phi_1,\ldots\phi_n$ at the covering number
stage, then come back to Rademacher complexity. 

We prove the theorem for the case $\beta=L=1$; the main theorem
statement follows by applying this result after rescaling via $\bar{\phi}(v)\ldef{}\frac{1}{\beta{}L}\phi(\beta{}v)$ and $\bar{\cF}\ldef{}\cF/\beta$.

For the first step, by the standard chaining result (e.g. Theorem 2 of \cite{srebro2010note}), we have
\begin{equation}
\Rad(\phi\circ\cF\midsem\xl[n])
\leq{} \inf_{\alpha>0}\crl*{
4\alpha{}n + 12\sqrt{n}\int_{\alpha}^{1}
\sqrt{\log\cN_{2}(\phi\circ\cF,\veps,\xl[n])}d\veps
}.\label{eq:dudley}
\end{equation}
Next, we use the $\ls_{\infty}$-Lipschitz property  to prove the following lemma.
\begin{lemma}For all $\veps>0$ and $\xl[n]$ it holds that
\label{lem:covering_bound}
  \begin{equation}
    \label{eq:covering bound}
    \log\cN_{2}(\phi\circ\cF,\veps,\xl[n])
\leq{} K\cdot{}\max_{i}\log\cN_{\infty}(\cFi,\veps,\xl[n]).
  \end{equation}
\end{lemma}
\begin{proof}
Let a sequence $v_1,\ldots,v_n\in\bbR^{K}$ and element $f\in\cF$ be fixed. Then we have
\[
\prn*{\frac{1}{n}\sum_{t=1}^{n}\prn*{\phi_t(f(x_t))-\phi_t(v_t)}^{2}}^{1/2}
\leq{} \max_{1\leq{}t\leq{}n} \abs*{\phi_t(f(x_t))-\phi_t(v_t)}
\leq{} \max_{1\leq{}t\leq{}n} \nrm*{f(x_t)-v_t}_{\infty}.
\]
Consequently, if sets $V_1,\ldots,V_K$ each witness the $L_{\infty}$
covering numbers for $\cF|_1,\ldots,\cF|_{K}$ at scale $\veps$, their
cartesian product witnesses the $L_2$ covering number for
$\phi\circ\cF$ at scale $\veps$. The result follows because the
cartesian product has size most $\max_{i}\abs*{V_i}^{K}$.
\end{proof}
\pref{lem:covering_bound} allows us to bound the entropy integral \pref{eq:dudley} in terms
of covering numbers for the coordinate restrictions of $\cF$. All we require now are the
following lemmas, which will allow us to bound the resulting entropy
integral by the Rademacher complexity of each class $\cFi$.
\begin{lemma}[\cite{rudelson2006combinatorics}, Theorem 4.4\footnote{The result is stated in \cite{rudelson2006combinatorics} with a restriction that $\veps<1/2$, but the range can be extended to get the result here by rescaling the function class.}]
\label{lem:rudelson}
For any $\delta\in(0,1)$ there exist constants $0<c<1$ and $C\geq{}0$ such that
for all $\veps\in(0,1)$,
\[
\log\cN_{\infty}(\cFi,\veps,\xl[n])\leq{}Cd_{i}\log\prn*{en/d_i\veps}\log^{\delta}(en/d_{i}),
\]
where $d_i=\fat_{c\veps}(\cFi)$.
\end{lemma}
\begin{lemma}[\cite{srebro2010smoothness}, Lemma A.2]
\label{lem:fat_rademacher}
For all $\veps\geq{}\frac{2}{n}\RadW(\cFi)$, it holds that 
\[
\fat_{\veps}(\cFi)\leq{}\frac{8}{n}\cdot{}\prn*{\frac{\RadW(\cFi)}{\veps}}^{2},\quad\text{and}\quad
\fat_{\veps}(\cFi)\leq{}n.\]
\end{lemma}
Define $r_i = \frac{8}{n}\prn*{\frac{\RadW(\cFi)}{c\veps}}^{2}$, with $c$ is as in \pref{lem:rudelson}. Applying the three lemmas in sequence, we have that for any
$\frac{2}{cn}\RadW(\cFi)\leq{}\veps<1$,
\vspace{-10pt}
\begin{align*}
  \log\cN_{2}(\phi\circ\cF,\veps,\xl[n])
&\overset{\text{(i)}}{\leq}  \max_{i}K\log\cN_{\infty}(\cFi,\veps, \xl[n]) \\
&\overset{\text{(ii)}}{\leq}\max_{i}CKd_{i}\log\prn*{en/d_i\veps}\log^{\delta}(en/d_{i})\\
&\overset{\text{(iii)}}{\leq}\max_{i}CKr_{i}\log\prn*{e^{2+\delta}n/r_i\veps}\log^{\delta}(e^{2+\delta}n/r_{i})\\
&\overset{\text{(iv)}}{\leq}\max_{i}\frac{C_1K}{n}\prn*{\frac{\RadW(\cFi)}{\veps}}^{2}\log^{1+\delta}\prn*{n/\RadW(\cFi)},
\end{align*}
where $C_1>0$ is a numerical constant. Here inequalities (i) and (ii) use \pref{lem:covering_bound} and \pref{lem:rudelson} respectively. Inequality (iii) uses that $d_i\leq{}n$, $r_i\leq{}2n$, and that for any
$a,b>0$, the function $x\mapsto{}x\log(a/x)\log^{\delta}(b/x)$ is
non-decreasing as long
as $a\geq{}b\geq{}e^{1+\delta}x$. Lastly, inequality (iv) is a direct
calculation after expanding $r_i$ and using that $\veps<1/e$ and $\RadW(\cFi)\leq{}n$.

To apply this bound in equation \pref{eq:dudley}, we adopt the
shorthand $\bar{r}=\max_{i}\RadW(\cFi)$ and choose $\alpha=\frac{2}{cn}\bar{r}$, which gives
\begin{align*}
\Rad(\phi\circ\cF\midsem\xl[n])
&\leq{} 
\frac{8}{c}\bar{r} + C_2\sqrt{K}\bar{r}\log^{(1+\delta)/2}(n/\bar{r})\int_{\frac{2}{cn}\bar{r}}^{1}
\veps^{-1}d\veps\\
&\leq{} 
  C_3\sqrt{K}\bar{r}\log^{\frac{3}{2}+\frac{\delta}{2}}(n/\bar{r}).
\end{align*}

% Note that $\fat_{\veps}(\cFi)\leq{}n$ for all $\veps\geq{}\frac{2}{n}\RadW(\cFi)$.
\end{proof}
\subsection{Optimality}
The right-hand side in \pref{thm:main} scales with the \emph{worst-case}
Rademacher complexity $\RadW(\cFi)$ rather than the more favorable
empirical Rademacher complexity $\Rad(\cFi\midsem\xl[n])$. At first
glance this looks peculiar, as both the classical Lipschitz contraction
inequality and Maurer's $\ls_2$ vector variant preserve the empirical Rademacher complexity. Perhaps surprisingly,
it turns out that \pref{thm:main} cannot be improved to depend on the
empirical Rademacher complexity without incurring an additional
$\sqrt{K}$ factor.
\begin{proposition}
\label{prop:lb}
There exists a set $\cX$, function class $\cF\subseteq\crl*{f:\cX\to\bbR^{K}}$, $1$-Lipschitz (w.r.t. $\ls_{\infty}$)
function $\phi$, and data sequence $\xl[n]$ for which
\[
\Rad(\phi\circ\cF\midsem\xl[n]) \geq{} 8^{-1/2}K\cdot{}\max_{i}\Rad(\cFi\midsem\xl[n]).
\]
\end{proposition}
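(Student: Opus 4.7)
The plan is to exhibit an explicit example whose ratio is already $K$, giving the bound with ample room over the constant $8^{-1/2}$. The guiding intuition is that the map $\phi(v)=\max_i v_i$ is $1$-Lipschitz with respect to $\nrm*{\cdot}_{\infty}$ and can ``pick out'' whichever coordinate of $f(x_t)$ is active at each sample point. If we arrange $\cF$ so that the $K$ coordinate restrictions $\cFi$ are genuinely free only on disjoint $1/K$-th fractions of the sample, then $\phi\circ\cF$ will aggregate contributions from all $K$ blocks simultaneously while each single $\cFi$ only sees its own block.

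Concretely, I would write $n=Km$, partition $[n]$ into blocks $G_1,\dots,G_K$ of size $m$, take $\cX=[n]$, $x_t=t$, $\phi(v)=\max_i v_i$, and
\[
\cF=\crl*{f^\sigma : \sigma\in\pmo^n},\qquad f^\sigma_i(x_t)=\begin{cases}\sigma_t & \text{if } t\in G_i,\\ -1 & \text{otherwise.}\end{cases}
\]
The only coordinate of $f^\sigma(x_t)$ not equal to $-1$ is the one indexed by the (unique) block containing $t$, so the composition collapses to $\phi(f^\sigma(x_t))=\max(\sigma_t,-1)=\sigma_t$. Consequently
\[
\Rad(\phi\circ\cF\midsem\xl[n])=\En_\eps\sup_{\sigma\in\pmo^n}\sum_{t=1}^n \eps_t\sigma_t=n.
\]

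For each fixed $i$, I would split $\sum_t \eps_t f^\sigma_i(x_t)=\sum_{t\in G_i}\eps_t\sigma_t-\sum_{t\notin G_i}\eps_t$; the supremum over $\sigma$ acts only on the first sum, yielding $\sum_{t\in G_i}|\eps_t|=m$ deterministically, while the second sum does not depend on $\sigma$ and has zero mean. Hence $\Rad(\cFi\midsem\xl[n])=m=n/K$, and the ratio of the two quantities is exactly $K$.

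The main obstacle is conceptual rather than computational: one has to identify a $\phi$ that is sign-asymmetric, ruling out symmetric candidates such as $\nrm*{\cdot}_{\infty}$ (which would give $\phi(f^\sigma(x_t))=1$ regardless of $\sigma_t$ and collapse the complexity of $\phi\circ\cF$), together with a ``pinning value'' for the inactive coordinates that lies strictly below the range of the active coordinate so the maximum reliably reveals $\sigma_t$. Once the construction is fixed, both Rademacher expectations are one-line computations and the $8^{-1/2}$ slack in the proposition is automatic.
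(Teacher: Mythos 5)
Your construction is correct: with the inactive coordinates pinned at $-1$, the maximum recovers $\sigma_t$ exactly, so $\Rad(\phi\circ\cF\midsem\xl[n])=n$ pointwise in $\eps$, while for each $i$ the supremum over $\sigma$ acts only on the block $G_i$ and gives $\Rad(\cFi\midsem\xl[n])=n/K$; the ratio is exactly $K\geq 8^{-1/2}K$. The underlying mechanism is the same as the paper's --- take $\phi(v)=\max_k v_k$ and arrange the data in $K$ blocks so that each coordinate restriction is only ``active'' on its own $1/K$ fraction of the sample --- but the function classes differ in an instructive way. The paper uses the small class $\crl*{x\mapsto(\sigma_1\tri*{e_1,x},\ldots,\sigma_K\tri*{e_K,x})\mid \sigma\in\pmo^{K}}$ of $2^K$ linear maps with inactive coordinates pinned at $0$, so the composed class reduces to $\max\crl*{\sigma_{i_t},0}$ and the lower bound requires the Khintchine inequality, yielding the ratio $8^{-1/2}K$. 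Your class has $2^{n}$ elements with one free sign per data point, which makes both suprema deterministic, removes the need for Khintchine, and sharpens the constant to $K$ (still consistent with the $\sqrt{2}K$ upper bound from the $\ls_2$ contraction inequality). The trade-off is that your coordinate classes $\cFi$ fully shatter their blocks, whereas the paper's example exhibits the gap already for a $2^K$-element class of linear functionals; for the existence claim in the proposition this is immaterial, and your remark about why a sign-symmetric choice such as $\nrm*{\cdot}_{\infty}$ would fail is also correct.
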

It follows from \pref{eq:maurer} that $\Rad(\phi\circ\cF\midsem\xl[n]) \leq{} \sqrt{2}K\cdot{}\max_{i}\Rad(\cFi\midsem\xl[n])$, so we have a tradeoff: We can either upper bound by empirical Rademacher at a cost of $O(K)$, or upper bound by worst-case Rademacher at a cost of $\tilde{O}(\sqrt{K})$ using \pref{thm:main}.
\begin{proof}
Let $K$ be fixed. We define $\cX=\crl*{e_i}_{i\leq{}K}$, where $e_i$ denotes the
$i$th standard basis vector in $\bbR^{K}$. We choose the hypothesis class
\[
\cF=\crl*{x\mapsto{}\prn*{\sigma_1\tri*{e_1,x},\ldots,\sigma_K\tri*{e_K,x}}\mid{}\sigma_i\in\pmo},
\]
and select $\phi(v)=\max_{k}\crl*{v_k}$. 

Let $n$ be divisible by $K$. We select $x_{1},\ldots,x_{n/K}=e_1$,
$x_{n/K},\ldots,x_{2n/K}=e_2$, and so on, and let $i_t$ be such that
$x_t=e_{i_t}$. We can write the empirical Rademacher complexity as
\begin{align*}
  \Rad(\phi\circ\cF\midsem\xl[n]) 
                   &=
                     \En_{\eps}\max_{\sigma_1,\ldots,\sigma_K\in\pmo}\sum_{t=1}^{n}\eps_t\max\crl*{\sigma_1\tri*{e_1,x_t},\ldots,\sigma_K\tri*{e_K,x_t}}\\
                   &=
                     \En_{\eps}\max_{\sigma_1,\ldots,\sigma_K\in\pmo}\sum_{t=1}^{n}\eps_t\max\crl*{\sigma_{i_t},0},\\
                   &= K\cdot{}\En_{\eps}\max_{\sigma\in\pmo}\sum_{t=1}^{n/K}\eps_t\max\crl*{\sigma,0}.
\end{align*}
Using the Khintchine inequality \citep{haagerup1981best}, we have a lower bound
\[
\En_{\eps}\max_{\sigma\in\pmo}\sum_{t=1}^{n/K}\eps_t\max\crl*{\sigma,0}
= \frac{1}{2}\En_{\eps}\abs*{\sum_{t=1}^{n/K}\eps_t}\geq \sqrt{\frac{n}{8K}}.
\]
On the other hand, for each $i$, direct calculation shows that
\[
  \Rad(\cFi,\xl[n])
  =\En_{\eps}\max_{\sigma\in\pmo}\sum_{t=1}^{n}\eps_t\sigma\tri*{e_i,x_{t}}=\En_{\eps}\abs*{\sum_{t=1}^{n/K}\eps_t}\leq{}\sqrt{\frac{n}{K}}.
\]
This proves the result. Note that there is no contradiction with
\pref{thm:main}, since the worst case Rademacher complexity for this construction is large: $\RadW(\cF_i)\geq{}\sqrt{n/2}$.
\end{proof}

% \end{proof}
\subsection{Extension to $\ls_p$ norms}
The final result we prove is a generalization of \pref{thm:main} to $\ls_p$ norms beyond $\ls_{\infty}$.
\begin{theorem}[$\ls_p$ contraction inequality]
\label{thm:lp}
Let $\cF\subseteq
\crl*{f:\cX\to\bbR^{K}}$, and let $\phi_1,\ldots,\phi_n$ each be $L$-Lipschitz with respect to the
$\ls_{p}$ norm, where $p\in(0,\infty)$. For any $\delta>0$, there exists a constant $C>0$
such that if
$\abs*{\phi_t(f(x))}\vee\nrm*{f(x)}_{\infty}\leq{}\rangebound$, then
% for
%all $x\in\cX$ and $f\in\cF$,
  \begin{equation}
    \label{eq:lp}
\Rad(\phi\circ\cF\midsem\xl[n])
\leq{} \bigOt(L)\cdot\prn*{\sum_{i=1}^{K}\RadW^{\frac{2p}{2+p}}(\cFi)}^{\frac{2+p}{2p}}.
  \end{equation}
\end{theorem}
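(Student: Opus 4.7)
The plan is to follow the same chaining, covering-number, and fat-shattering pipeline used for \pref{thm:main}, but to allow the cover scale to vary across coordinates and to optimize over it. After the usual rescaling to $L = \rangebound = 1$, the Dudley entropy integral \pref{eq:dudley} is the starting point.

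The key modification is a generalization of \pref{lem:covering_bound} to the $\ls_p$-Lipschitz setting. If $V_i$ is an empirical $L_\infty$ cover of $\cFi$ at scale $\veps_i$, then for any $f \in \cF$ there is some $v$ in the product cover with $\abs*{f_i(x_t) - v_{t,i}} \leq \veps_i$ for every $t, i$, and the $\ls_p$-Lipschitz property yields
\[
\prn*{\tfrac{1}{n}\sum_{t=1}^{n} \prn*{\phi_t(f(x_t)) - \phi_t(v_t)}^{2}}^{1/2} \leq \max_{t} \nrm*{f(x_t) - v_t}_p \leq \prn*{\sum_{i=1}^K \veps_i^p}^{1/p}.
\]
Consequently $\log\cN_2\prn*{\phi\circ\cF, \prn*{\sum_i \veps_i^p}^{1/p}, \xl[n]} \leq \sum_i \log\cN_\infty(\cFi, \veps_i, \xl[n])$, and unlike in \pref{thm:main} the coordinate scales $\veps_i$ are chosen separately.

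Applying \pref{lem:rudelson} and \pref{lem:fat_rademacher} coordinate-wise as in \pref{thm:main} bounds each $\log\cN_\infty(\cFi, \veps_i, \xl[n])$ by a polylogarithmic factor times $r_i^2/(n\veps_i^2)$, where $r_i = \RadW(\cFi)$, whenever $\veps_i$ lies above the critical scale $\sim r_i/n$. Holding the aggregate scale $\veps = \prn*{\sum_i \veps_i^p}^{1/p}$ fixed and minimizing $\sum_i r_i^2/\veps_i^2$ over the coordinate scales, Lagrange multipliers identify the optimum as $\veps_i \propto r_i^{2/(p+2)}$, and direct substitution yields
\[
\log\cN_2(\phi\circ\cF, \veps, \xl[n]) \lesssim \frac{1}{n\veps^2}\prn*{\sum_{i=1}^K r_i^{2p/(p+2)}}^{(p+2)/p}
\]
up to polylog factors. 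Plugging this covering-number bound into the Dudley integral in exactly the same way as the closing lines of \pref{thm:main} then produces the advertised $\bigOt\prn*{\prn*{\sum_i r_i^{2p/(p+2)}}^{(p+2)/(2p)}}$ bound.

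The main obstacle is the lower limit of the Dudley integral: for small enough aggregate scale $\veps$, some of the Lagrange-optimal coordinate scales $\veps_i \propto r_i^{2/(p+2)}$ will drop below the critical threshold at which \pref{lem:fat_rademacher} applies. As in \pref{thm:main}, this is handled by cutting off the integral at an appropriate $\alpha > 0$ chosen so that every coordinate stays above its own critical scale, with the contribution from $\veps < \alpha$ absorbed into an $O(\alpha n)$ term. Verifying that the resulting integral evaluates to the claimed quantity, modulo the polylogarithmic factors absorbed by $\bigOt$, is a direct computation paralleling the end of the proof of \pref{thm:main}.
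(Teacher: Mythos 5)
Your proposal follows the paper's proof essentially step for step: the same coordinate-wise product of $L_\infty$ covers with aggregate scale $\prn*{\sum_i \veps_i^p}^{1/p}=\veps$, the same application of \pref{lem:rudelson} and \pref{lem:fat_rademacher}, and the same choice $\veps_i \propto \RadW(\cFi)^{2/(p+2)}$ (which the paper simply writes down as the definition of the scale parameters rather than deriving via Lagrange multipliers), followed by the same Dudley-integral computation with a truncation at the critical scale. This is correct and is the paper's argument.
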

\begin{proof}
We only sketch the result as the argument follows
\pref{thm:main}. As in the proof of \pref{thm:main}, assume
$\beta=L=1$ without loss of generality. Define covering scale parameters
\[
\veps_i =
\veps\cdot\prn*{\frac{\RadW^{\frac{2p}{2+p}}(\cFi)}{\sum_{j=1}^{K}\RadW^{\frac{2p}{2+p}}(\cF|_j)}}^{1/p}.
\]
 We
claim that  $\log\cN_{2}(\phi\circ\cF,\veps,\xl[n])
\leq{} \sum_{i=1}^{K}\log\cN_{\infty}(\cFi,\veps_i,\xl[n])$. Observe that
\[
\prn*{\frac{1}{n}\sum_{t=1}^{n}\prn*{\phi_t(f(x_t))-\phi_t(v_t)}^{2}}^{1/2}
\leq{} \max_{1\leq{}t\leq{}n} \abs*{\phi_t(f(x_t))-\phi_t(v_t)}
\leq{} \max_{1\leq{}t\leq{}n} \nrm*{f(x_t)-v_t}_{p}.
\]
Since $\prn*{\sum_{i=1}^{K}\veps_{i}^{p}}^{1/p}=\veps$, it follows
that if $V_1,\ldots,V_K$ are $L_{\infty}$ covers
for $\cF|_1,\ldots,\cF|_K$ at scales $\veps_1,\ldots,\veps_K$, their
cartesian product is a $\veps$-cover for $\phi\circ\cF$.

From here, we apply the same sequence of inequalities as in
\pref{thm:main}. After applying \pref{lem:rudelson} and
\pref{lem:fat_rademacher} in conjunction with the
covering number bound above, we have
\[
\sqrt{\log\cN_{2}(\phi\circ\cF,\veps,\xl[n])}
\approxleq\sqrt{\sum_{i=1}^{K}\frac{\RadW^{2}(\cFi)}{\veps_i^{2}}}
=
\veps^{-1}\cdot%\cdot\sqrt{\sum_{i=1}^{K}\frac{\RadW^{2}(\cFi)}{\veps_i^{2}}}
\prn*{\sum_{i=1}^{K}\RadW^{\frac{2p}{2+p}}(\cFi)}^{\frac{2+p}{2p}}.
\]
Applying this inequality within the Dudley integral leads to the result.
\end{proof}

%%% Local Variables:
%%% mode: latex
%%% TeX-master: "paper.tex"
%%% End:

\paragraph*{Acknowledgements}
We thank Aryeh Kontorovich for posing a question that led us to this result, and for encouraging us to make the note available online. We are grateful to Ayush Sekhari and Karthik Sridharan for several helpful discussions.

% \newpage
\bibliography{refs}
% \newpage

\end{document}